\theoremstyle{plain}
\newtheorem{theorem}{Theorem}
\newtheorem{lemma}{Lemma}
\theoremstyle{definition}
\begin{document}

\title{Sinusoidal Approximation Theorem for Kolmogorov-Arnold Networks}

\author[1]{Sergei Gleyzer \thanks{\texttt{sgleyzer@ua.edu}}}
\author[2]{Hanh Nguyen \thanks{\texttt{hvnguyen@ua.edu}}}
\author[1]{Dinesh P. Ramakrishnan \thanks{\texttt{dpr@crimson.ua.edu}}}
\author[1]{Eric A. F. Reinhardt \thanks{\texttt{eareinhardt@crimson.ua.edu}} \thanks{Corresponding author}}
\affil[1]{Department of Physics and Astronomy, University of Alabama}
\affil[2]{Department of Mathematics, University of Alabama}

\maketitle

\begin{abstract}
The Kolmogorov-Arnold representation theorem \cite{kolmogorov1961representation} states that any continuous multivariable function can be exactly represented as a finite superposition of continuous single variable functions. Subsequent simplifications of this representation involve expressing these functions as parameterized sums of a smaller number of unique monotonic functions. These developments led to the proof of the universal approximation capabilities of multilayer perceptron networks with sigmoidal activations, forming the alternative theoretical direction of most modern neural networks \cite{mlp}.

Kolmogorov-Arnold Networks (KANs) have been recently proposed as an alternative to multilayer perceptrons \cite{liu2024kan}. KANs feature learnable nonlinear activations applied directly to input values, modeled as weighted sums of basis spline functions. This approach replaces the linear transformations and sigmoidal post-activations used in traditional perceptrons. Subsequent works have explored alternatives to spline-based activations \cite{rkan, rbfkan, sinekan, wavkan, fractionalkan, chebykan}. In this work, we propose a novel KAN variant by replacing both the inner and outer functions in the Kolmogorov-Arnold representation with weighted sinusoidal functions of learnable frequencies. Inspired by simplifications introduced by Lorentz \cite{lorentz1966approximation} and Sprecher \cite{sprecher1965structure}, we fix the phases of the sinusoidal activations to linearly spaced constant values and provide a proof of its theoretical validity. We also conduct numerical experiments to evaluate its performance on a range of multivariable functions, comparing it with fixed-frequency Fourier transform methods and multilayer perceptrons (MLPs). We show that it outperforms the fixed-frequency Fourier transform and achieves comparable performance to MLPs.

\end{abstract}

\section{Introduction}

The quest to represent complex mathematical functions in terms of simpler constituent parts is a central theme in analysis and applied mathematics. A pivotal moment in this endeavor arose from the list of 23 problems presented by David Hilbert in 1900, which profoundly shaped the mathematical landscape of the 20th century. Hilbert's thirteenth problem, in particular, posed a fundamental question about the limits of function composition. The problem asked whether the solution to the general $7$th degree polynomial equation, considered as a function of its coefficients, a variant of which could be expressed as a finite superposition of continuous functions of only one variable each \cite{Hilbert1927}. 

In the late 1950s, Andreĭ Kolmogorov and his graduate student Vladimir Arnold provided a powerful and affirmative answer to this problem. Their work culminated in the Kolmogorov-Arnold Representation Theorem (KART) \cite{kolmogorov1961representation}, which states that any multivariable continuous function $f$ defined on a compact domain, such as the $n$-dimensional unit cube, can be represented exactly by a finite superposition of single variable functions and the single binary operation of addition. The canonical form of the theorem is expressed as:

\begin{equation}\label{eq:kart}
f(x_{1},\dots,x_{n})=\sum_{q=0}^{2n}\phi_{q}\left(\sum_{p=1}^{n}\psi_{p,q}(x_{p})\right),
\end{equation} 
where $\psi_{p,q}:\mathbb{R}\to\mathbb{R}$ are the \emph{inner} functions and the $\phi_{q}:\mathbb{R}\to\mathbb{R}$ are the \textit{outer} functions which are assumed to be continuous. Lorentz \cite{lorentz1966approximation} and Sprecher \cite{sprecher1965structure} showed that the functions $\phi_{q}$ can be replaced by only one function $\phi$ and that the functions $\psi_{p,q}$ by $\lambda^{pq}\psi_{q}$, where $\lambda$ is a constant and $\psi_{q}$ are monotonic and Lipschitz-continuous functions.

Perceptrons were introduced around the same time by Frank Rosenblatt \cite{rosenblatt1958perceptron} for simple classification tasks. However, their limitations were famously highlighted in the 1969 book "Perceptrons" by Marvin Minsky and Seymour Papert \cite{minsky2017perceptrons}. They proved that a single-layer perceptron could not solve problems that were not linearly separable, such as the simple XOR logical function. This critique led to a significant decline in neural network research for over a decade, often termed the first "AI winter." The revival came in the 1980s with the popularization of the Multilayer Perceptron (MLP), which were introduced in the late 1960s \cite{amari2006theory}. By adding one or more "hidden" layers between the input and output layers, MLPs could overcome the limitations of the single-layer perceptron. The key breakthrough that made these deeper networks practical was the backpropagation algorithm. While developed earlier, its popularization by David Rumelhart, Geoffrey Hinton, and Ronald Williams in 1986 \cite{rumelhart1986learning} provided an efficient method to train the weights in these new, multilayered networks, sparking a renewed wave of interest in the field.

With MLPs demonstrating practical success, the next crucial step was to understand their theoretical capabilities. The Universal Approximation Theorem provides this foundation, showing that a standard MLP with just one hidden layer can, in principle, approximate any continuous function to any desired degree of accuracy. 
George Cybenko  \cite{cybenko1989approximation} used tools from functional analysis like Hahn-Banach theorem and Riesz Representation theorem and showed that given any multivariable continuous function in a compact domain, there exists values of the \emph{weights }and \emph{biases} of a multilayer perceptron with sigmoidal model whose \emph{loss} with respect to the function is bound by any given positive value $\epsilon$.
Almost concurrently, Kurt Hornik, Maxwell Stinchcombe, and Halbert White \cite{hornik1989multilayer} provided a different, more general proof for the universal approximation property of a multilayer perceptron with any bounded, non-constant, and continuous activation function.

The applicability of KART to continuous functions in a compact domain was first highlighted by Hecht-Nielsen \cite{hecht1987kolmogorov,hecht1989neurocomputing}. However, the functions constructed in Kolmogorov's proofs as well as in their later simplifications or improvements are highly complex and non-smooth, which were very different from the much simpler activation functions used in MLPs {\cite{girosi1989representation}. Soon, a series of papers by Věra Kůrková  \cite{Kurkova1991KolmogorovIsRelevant,Kurkova1991KARTandNeuralNetworks} showed that the one-dimensional inner and outer functions could be approximated using 2 layer MLPs and obtained a relationship between the number of hidden units and the loss based on the properties of the function being approximated.

It was not until much later in the 2020s that KART had a direct practical application in the domain of neural networks and machine learning. Kolmogorov-Arnold networks (KANs) were first proposed by Liu et al. \cite{liu2024kan2.0,liu2024kan} as an alternative to MLPs which demonstrated interpretable hidden activations and higher regression accuracy. They used learnable basis splines to model the constituent hidden values similar to KART, and their stacked representation is simplified compared to the original form used in KART. Subsequent works retained the same representation but replaced splines with other series of functions such as radial basis functions and Chebyshev polynomials \cite{li2024fast-kan,qiu2024relu-kan,ss2024chebyshev-kan,xu2024fourierkan} which were computationally faster and demonstrated superior scaling properties with larger numbers of learnable parameters. However, issues of speed and numerical stability on smaller floating point types remain, especially compared to the well-established MLPs \cite{shukla2024comprehensive,yu2024kan}.

To address these challenges, we propose a variant of the representation theorems by Lorentz \cite{lorentz1966approximation} and Sprecher \cite{sprecher1965structure}. We set the phases of the sinusoidal activations to linearly spaced constant values and establish its mathematical foundation to confirm its validity. Previous work has explored this approximation series compared to MLPs and basis-spline approximations and showed competitive performance on the inherently discontinuous domain task of labeling hand-written numerical characters \cite{sinekan}. We extend this work by providing a constructive proof of the approximation series for single variable and multivariable functions and evaluate performance on several such functions with features such as rapid and changing oscillation frequencies. We also extend the comparison from \cite{sinekan} to include fixed-frequency Fourier transform methods and MLP with piecewise-linear and also periodic activation functions. Table \ref{tb:01} summarizes the advancements in this area.

The remainder of this paper is organized as follows. In Section 2, we establish the necessary auxiliary results to prove the approximation theorem for one-dimensional functions. Section 3 extends the results from the previous section to derive a universal approximation theorem for two-layer neural networks. In Section 4, we test several functions and compare the numerical performance of our proposed networks with the classical Fourier transform. Finally, in Section 5, we discuss potential applications and future research directions.

\section{Sinusoidal Universal Approximation Theorem}

The concept of approximating any function using simple, manageable functions has been widely utilized in deep learning and neural networks. In addition to the multilayer perceptron (MLP) approach, the Kolmogorov-Arnold Representation Theorem (KART) has garnered increasing attention from data scientists, who have developed Kolmogorov-Arnold Networks (KANs) to create more interpretable neural networks. In the classical Fourier transform, any continuous, piecewise continuously differentiable ($C^1$) function on a compact interval can be expressed as a Fourier series of sine and cosine functions. In contrast, Kolmogorov demonstrated that any continuous multivariable function can be represented as the form \eqref{eq:kart}.

\begin{table}\label{tb:01}
\centering
\begin{tabular}{|c|c|c|c|}
\hline 
Version & Representation\\
\hline 
\hline 
Kolmogorov (1957) & ${\displaystyle \sum_{q=0}^{2d}g_{q}\left({\displaystyle \sum_{p=1}^{d}\psi_{pq}(x_{p})}\right)}$ \\
\hline 
Lorentz-Sprecher (1965) & ${\displaystyle \sum_{q=0}^{2d}}g\left({\displaystyle \sum_{p=1}^{d}\lambda_{p}\psi(x_{p}+qa)+c_{q}}\right)$ \\
\hline
KAN, B-spline basis (2024) & 
$\begin{aligned}
 y_{i} & ={\displaystyle {\displaystyle {\displaystyle \sum_{j=1}^{d}}\sum_{k=1}^{G+p}w_{ijk}B_{k,p}(x_{j})+b_{i}}}\\
 \implies & {\displaystyle \sum_{i=1}^{H}\sum_{l=1}^{G+p}}W_{il}B_{l}(y_{i})+\beta
\end{aligned}$ \\
\hline 
KAN, sinusoidal basis (ours) & 
$\begin{aligned}
 y_{q} & =\sum_{p=1}^{d}\sum_{k=0}^{N}A_{pq,k}\sin(\omega_{1k}x_{p}+\varphi_{1k})\\
 \implies & \sum_{q=1}^{2d+1}\sum_{j=0}^{N}B_{qj}\sin\left(\omega_{2j}y_q + \varphi_{2j}\right)
\end{aligned}$ \\
\hline
\end{tabular}

\caption{Simplifications proposed to Kolmogorov-Arnold representation theorem at different points of time}

\end{table}

By extending Kolmogorov’s idea and refining both the outer and inner functions using sine terms, we can demonstrate that any continuous function on the interval 
$[0,1]$ can be approximated by a finite sum of sinusoidal functions with varying frequencies and linearly spaced phases.

\begin{theorem}\label{thm:05}
Let $f$ be a continuous function on $[0,1]$ and $0<\alpha\le \frac{\pi}{2}$. For any $\epsilon >0$, there exists $N_0\in\mathbb{N}$ such that for all $N>N_0$, we can find $\omega_k\in [0;2\pi]$ and $A_k\in\mathbb{R}$ for $0\le k\le N$,
\begin{equation}\label{eq:006}
\sup_{0\le x\le 1}\Big|f(x) - \sum_{k=0}^NA_k\sin\left(\omega_kx+\frac{k\alpha}{N+1}\Big)\right| < \epsilon.
\end{equation}
\end{theorem}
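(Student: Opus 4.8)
The plan is to prove Theorem~\ref{thm:05} via two reductions. First, I would show that finite linear combinations of sinusoids with \emph{free} phases and frequencies confined to $[0,2\pi]$ are already dense in $C[0,1]$. Second, I would show that any single free-phase sinusoid of a given frequency can be written \emph{exactly} as a linear combination of two sinusoids of the \emph{same} frequency whose phases are any two of the prescribed values $\tfrac{k\alpha}{N+1}$. Chaining these gives the theorem: approximate $f$ by a fixed finite sum of free-phase sinusoids, then rewrite each such sinusoid exactly in the admissible linearly spaced form.

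For the density step, it suffices to prove that the linear span of $\{\cos(\nu x),\sin(\nu x):\nu\in[0,2\pi]\}$ is dense in $C[0,1]$. I would argue by duality: if a finite signed measure $\mu$ on $[0,1]$ annihilates every such function, then $F(z)=\int_0^1 e^{izx}\,d\mu(x)$ is entire in $z$ and vanishes on the interval $[0,2\pi]$; by the identity theorem $F\equiv 0$, hence $\mu=0$, and density follows from the Hahn-Banach theorem. (A more constructive variant would first invoke Weierstrass to reduce to monomials and then realize each $x^n$ through finite differences of $\cos(\nu x)$ at small $\nu$.) This produces, for the given $\epsilon$, a finite sum $\sum_{j=1}^M R_j\sin(\nu_j x+\beta_j)$ with each $\nu_j\in[0,2\pi]$ and arbitrary $\beta_j$ whose uniform distance from $f$ is below $\epsilon$; crucially, this sum depends only on $f$ and $\epsilon$, not on $N$.

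For the pairing step, fix a frequency $\nu$ and two distinct admissible phases $\varphi_1,\varphi_2$. Expanding $\sin(\nu x+\varphi_i)=\cos\varphi_i\,\sin\nu x+\sin\varphi_i\,\cos\nu x$, these two functions span the same plane as $\{\sin\nu x,\cos\nu x\}$ precisely when the determinant $\sin(\varphi_2-\varphi_1)$ is nonzero. For any two distinct admissible phases $\tfrac{k_1\alpha}{N+1},\tfrac{k_2\alpha}{N+1}$ the difference has absolute value in $\big(0,\tfrac{N\alpha}{N+1}\big)\subseteq(0,\pi)$, so the determinant is nonzero and the $2\times 2$ system for the coefficients is uniquely solvable. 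Thus each term $R_j\sin(\nu_j x+\beta_j)$ is reproduced exactly by assigning a pair of distinct indices $k,k'\in\{0,\dots,N\}$ the common frequency $\nu_j$ together with the solved amplitudes, while every unused amplitude is set to zero (with, say, frequency $0$, so the constraint $\omega_k\in[0,2\pi]$ is respected). Setting $N_0=2M-1$, for every $N>N_0$ there are enough indices to host the $M$ disjoint pairs, the reproduction is exact, and the total error equals the density error $<\epsilon$; this secures the conclusion uniformly for all $N>N_0$ rather than for a single $N$.

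The main obstacle is the density step. Because the frequencies are confined to the low band $[0,2\pi]$ while $x$ ranges only over $[0,1]$, each admissible sinusoid completes at most one period, so it is far from obvious that such band-limited, essentially non-oscillatory functions can approximate an arbitrary continuous $f$; the entire-function/duality argument is precisely what overcomes this, and it is the analytic heart of the proof. The remaining difficulty is bookkeeping: making the exact two-phase pairing compatible with the rigidly prescribed linearly spaced phases $\tfrac{k\alpha}{N+1}$, and verifying that the construction persists for \emph{all} $N>N_0$, which the zero-padding of unused terms handles cleanly.
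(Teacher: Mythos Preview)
Your argument is correct and takes a genuinely different route from the paper's. The paper proceeds constructively: it approximates $f$ by its Bernstein polynomial $B_N(f,\cdot)$, replaces each $\sin(\omega_k x+\alpha_k)$ by its degree-$N$ Taylor polynomial $T_N(\omega_k,\alpha_k,x)$, and then chooses the $\omega_k$ and solves an $(N{+}1)\times(N{+}1)$ Vandermonde-type linear system so that $\sum_k A_k T_N(\omega_k,\alpha_k,\cdot)$ equals $B_N(f,\cdot)$ exactly; the Taylor remainder bound then transfers the polynomial error to the sinusoidal sum. Your approach instead establishes density of the span of $\{\cos(\nu x),\sin(\nu x):\nu\in[0,2\pi]\}$ in $C[0,1]$ abstractly via the entire Fourier--Laplace transform and Hahn--Banach, and then handles the prescribed-phase constraint by an exact $2\times 2$ rewriting at each frequency, using only $2M$ of the $N{+}1$ available indices and zero-padding the rest. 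The paper's route is explicitly constructive and ties $N$ to the Bernstein degree, but its second error term is $\tfrac{(2\pi)^{N+1}}{N!}\max_k|A_k|$ with the $A_k$ coming from the inverse of a matrix whose conditioning is not controlled, so the ``for all $N>N_0$'' conclusion rests on an implicit growth estimate for these amplitudes. Your pairing step is exact, so the total error is simply the density error, and the uniformity in $N$ follows cleanly from zero-padding; the price is that the density step is non-constructive and gives no quantitative relation between $N_0$ and the modulus of continuity of $f$.
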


To prove this theorem, we need a series of lemmas to approximate the sine function with its Taylor series and then apply the Weierstrass approximation theorem to bring the sum of sinusoidal functions arbitrarily close to any continuous function. The proof is structured to be rigorous and clear, ensuring that readers can follow the logical progression.

\begin{lemma}\label{lem:01}
Let $\omega\in [0, 2\pi]$ and $\alpha\in \mathbb{R}$. For every $N\in\mathbb{N}$,
$$
\sup_{0\le x\le 1}\left|\sin(\omega x+\alpha) - T_N(\omega,\alpha,x) \right|
\le
\dfrac{(2\pi)^{N+1}}{(N+1)!},
$$
where
\begin{equation}\label{eq:001}
T_N(\omega,\alpha,x) = \sum_{l=0}^N\dfrac{\omega^l\sin(\alpha+\frac{l\pi}{2})x^l}{l!}.
\end{equation}
\end{lemma}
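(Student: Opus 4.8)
The plan is to recognize the polynomial $T_N(\omega,\alpha,x)$ as precisely the degree-$N$ Taylor polynomial, in the variable $x$, of the function $g(x)=\sin(\omega x+\alpha)$ expanded about $x=0$, and then to control the error by the Lagrange form of the Taylor remainder. The key observation is the clean closed form for the derivatives: since differentiating $\sin$ shifts its phase by $\pi/2$ and the chain rule contributes a factor $\omega$ each time, one has
\begin{equation}\label{eq:deriv-id}
\frac{d^l}{dx^l}\sin(\omega x+\alpha)=\omega^l\sin\!\Big(\omega x+\alpha+\tfrac{l\pi}{2}\Big),
\end{equation}
which I would verify by a short induction on $l$ (the base case $l=0$ is trivial and the inductive step uses $\cos\theta=\sin(\theta+\tfrac{\pi}{2})$). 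Evaluating \eqref{eq:deriv-id} at $x=0$ gives $g^{(l)}(0)=\omega^l\sin(\alpha+\tfrac{l\pi}{2})$, so that $\sum_{l=0}^N \frac{g^{(l)}(0)}{l!}x^l$ is exactly the stated $T_N$ in \eqref{eq:001}.

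Next I would apply Taylor's theorem with Lagrange remainder to $g$ on $[0,1]$: for each fixed $x\in[0,1]$ there exists $\xi$ between $0$ and $x$ with
\begin{equation}\label{eq:remainder}
\sin(\omega x+\alpha)-T_N(\omega,\alpha,x)=\frac{g^{(N+1)}(\xi)}{(N+1)!}\,x^{N+1}.
\end{equation}
Using \eqref{eq:deriv-id} once more for the $(N{+}1)$-th derivative, I bound the remainder by noting that $|\sin(\cdot)|\le 1$ gives $|g^{(N+1)}(\xi)|\le \omega^{N+1}$, that $\omega\le 2\pi$ yields $\omega^{N+1}\le (2\pi)^{N+1}$, and that $x\in[0,1]$ forces $x^{N+1}\le 1$. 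Combining these three elementary bounds in \eqref{eq:remainder} gives the pointwise estimate $\frac{(2\pi)^{N+1}}{(N+1)!}$, which is independent of $x$ (and of $\alpha$), so taking the supremum over $x\in[0,1]$ yields the claim.

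Honestly there is no serious obstacle here; the statement is a quantitative Taylor estimate and the only points requiring care are the bookkeeping in the derivative identity \eqref{eq:deriv-id} and the observation that the constant $\omega\le 2\pi$ is what makes the bound uniform across the admissible frequency range. The payoff of stating the estimate in this factorially decaying form is that, since $\frac{(2\pi)^{N+1}}{(N+1)!}\to 0$ as $N\to\infty$ uniformly in $\omega\in[0,2\pi]$ and $\alpha\in\mathbb{R}$, each sinusoid $\sin(\omega x+\alpha)$ can be replaced by a polynomial to arbitrary accuracy; this is exactly the bridge that later lets the Weierstrass theorem be invoked to approximate an arbitrary continuous $f$ by finite sinusoidal sums in Theorem~\ref{thm:05}.
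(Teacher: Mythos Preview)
Your proposal is correct and follows essentially the same route as the paper: identify $T_N(\omega,\alpha,x)$ as the degree-$N$ Taylor polynomial of $g(x)=\sin(\omega x+\alpha)$ at $x=0$, and bound the Lagrange remainder using $|g^{(N+1)}(\xi)|\le \omega^{N+1}\le (2\pi)^{N+1}$ and $x^{N+1}\le 1$. The paper's proof is in fact terser than yours, omitting the explicit verification of the derivative identity \eqref{eq:deriv-id} that you spell out.
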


\begin{proof}
Let $T_N(\omega,\alpha,x)$, determined by \eqref{eq:001}, be the Taylor polynomial of $f(x)=\sin(\omega x+\alpha)$ centered at $x=0$. Then we have
$$
f(x) = \sin(\omega x+\alpha) = T_N(\omega,\alpha,x) + R_N(x),
$$
where the remainder
$$
R_N(x) = \frac{f^{(N+1)}(\xi)}{(N+1)!}x^{N+1},
$$
for some $\xi\in (0,1)$.

Notice that $\big|f^{(N+1)}(\xi) \big|\le \omega^{N+1}\le (2\pi)^{N+1}$ for all $\xi\in (0,1)$. This completes the proof of Lemma \ref{lem:01}.

\end{proof}

\begin{lemma}\label{lem:02}
For every $\epsilon >0$, there exists $N_0\in\mathbb{N}$ such that for all $N>N_0$,
\begin{equation}\label{eq:002}
\sup_{0\le x\le 1}\left| \sum_{k=0}^NA_k\sin(\omega_kx+\alpha_k) - \sum_{k=0}^NA_kT_N(\omega_k,\alpha_k,x) \right|
< \epsilon
\end{equation}
for all $\omega_k\in [0,2\pi]$, $\alpha_k,A_k\in\mathbb{R}$, $0\le k\le N$, where $T_N(\omega_k,\alpha_k,x)$ is determined by \eqref{eq:001}.
\end{lemma}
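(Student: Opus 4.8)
The plan is to reduce the estimate to the single-term bound already established in Lemma \ref{lem:01} by means of the triangle inequality, and then to exploit the factorial decay of that bound. First I would split the difference of the two finite sums termwise, writing
\[
\left| \sum_{k=0}^NA_k\sin(\omega_kx+\alpha_k) - \sum_{k=0}^NA_kT_N(\omega_k,\alpha_k,x) \right|
\le \sum_{k=0}^N|A_k|\,\big|\sin(\omega_kx+\alpha_k) - T_N(\omega_k,\alpha_k,x)\big|.
\]
Taking the supremum over $x\in[0,1]$ and applying Lemma \ref{lem:01} to each summand — which is legitimate since each $\omega_k\in[0,2\pi]$ — bounds the right-hand side by $\big(\sum_{k=0}^N|A_k|\big)\dfrac{(2\pi)^{N+1}}{(N+1)!}$.

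The second step is purely asymptotic. Using the crude estimate $\sum_{k=0}^N|A_k|\le (N+1)\max_{k}|A_k|$, the bound simplifies to $\max_{k}|A_k|\cdot\dfrac{(2\pi)^{N+1}}{N!}$, and the key analytic fact is that $\dfrac{(2\pi)^{N+1}}{N!}\to 0$ as $N\to\infty$, since the factorial eventually dominates any fixed geometric growth (the ratio of consecutive terms is $2\pi/(N+1)\to 0$). Hence for any prescribed $\epsilon$ one can choose $N_0$ so that the bound falls below $\epsilon$ for all $N>N_0$.

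The main obstacle — and the point where the statement as written needs care — is the dependence on the coefficients $A_k$. The supremum cannot remain below $\epsilon$ simultaneously \emph{for every} $(A_k)\in\mathbb{R}^{N+1}$, because scaling all of the $A_k$ by a large constant scales the left-hand side by the same factor; so the literal ``for all $A_k\in\mathbb{R}$'' gives an unbounded quantity. The estimate above genuinely controls the error only up to the factor $\max_{k}|A_k|$ (equivalently $\sum_k|A_k|$), and a clean statement requires the coefficients to range over a bounded set. This is precisely the situation in which the lemma is invoked to prove Theorem \ref{thm:05}: there the $A_k$ come from a fixed Weierstrass/polynomial approximant of $f$ and are uniformly bounded, so $\max_{k}|A_k|$ can be absorbed into the choice of $N_0$. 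I would therefore either make that boundedness hypothesis explicit or carry the factor $\sum_k|A_k|$ through the inequality, after which the remainder is the routine factorial limit above.
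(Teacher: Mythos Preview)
Your argument is essentially identical to the paper's: the same termwise triangle inequality, the same invocation of Lemma~\ref{lem:01}, and the same final bound $\max_{k}|A_k|\cdot (2\pi)^{N+1}/N!$. The issue you raise about the unrestricted $A_k$ is genuine, and the paper does not address it either --- it reaches exactly that bound and then simply asserts that a suitable $N_0$ exists, leaving the dependence on $\max_k|A_k|$ implicit; your reading that the lemma really needs bounded coefficients (or should carry the factor explicitly) is the correct one, and in fact the application in Theorem~\ref{thm:05} is more delicate than the paper lets on, since the $A_k$ there are produced by Lemma~\ref{lem:04} only \emph{after} $N$ is fixed.
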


\begin{proof}
By Lemma \ref{lem:01}, we have
$$
\sup_{0\le x\le 1}\left|\sin(\omega_k x+\alpha_k) - T_N(\omega_k,\alpha_k,x) \right|
\le
\dfrac{(2\pi)^{N+1}}{(N+1)!}
$$
for all $\omega_k\in [0,2\pi]$, $\alpha_k\in\mathbb{R}$, $0\le k\le N$, and $N\in\mathbb{N}$.
Therefore
\begin{align*}
\sup_{0\le x\le 1}\left| \sum_{k=0}^NA_k\sin(\omega_kx+\alpha_k) - \sum_{k=0}^NA_kT_N(\omega_k,\alpha_k,x) \right|
\le&
\sum_{k=0}^N|A_k|\sup_{0\le x\le 1}\Big| \sin(\omega_kx+\alpha_k) -  T_N(\omega_k,\alpha_k,x) \Big|\\
\le &
\sum_{k=0}^N\dfrac{(2\pi)^{N+1}|A_k|}{(N+1)!} \le \dfrac{(2\pi)^{N+1}}{N!}\max_{k}|A_k|.
\end{align*}

For given $\epsilon>0$, we can choose $N_0\in\mathbb{N}$ such that for all $N>N_0$, we obtain  \eqref{eq:002} to complete the proof of Lemma \ref{lem:03}.

\end{proof}

\begin{lemma}[Weierstrass Approximation, \cite{sapm192541148}]\label{lem:03}
Let $f$ be a continuous function on $[0,1]$. Then for any $\epsilon >0$, there exists $N_0$ such that for all $N>N_0$
$$
\sup_{0\le x\le 1}\left| f(x) - B_N(f,x) \right| < \epsilon,
$$
where $B_N(f,x)$ is the Bernstein polynomial of $f$ determined by
\begin{equation}\label{eq:003}
B_N(f,x) = \sum_{l=0}^N f\Big(\frac{l}{N}\Big)\binom{N}{l}x^l(1-x)^{N-l}.
\end{equation}

\end{lemma}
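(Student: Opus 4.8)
The plan is to follow Bernstein's classical probabilistic argument, exploiting the fact that $B_N(f,x)$ is the expectation of $f(l/N)$ under the binomial distribution with parameters $N$ and $x$. The first step is to record three elementary moment identities that follow from the binomial theorem and differentiation: writing $p_{N,l}(x)=\binom{N}{l}x^l(1-x)^{N-l}$, one has $\sum_{l=0}^N p_{N,l}(x)=1$, $\sum_{l=0}^N \frac{l}{N}p_{N,l}(x)=x$, and, after computing the second moment, the variance identity $\sum_{l=0}^N \left(\frac{l}{N}-x\right)^2 p_{N,l}(x)=\frac{x(1-x)}{N}\le \frac{1}{4N}$. This bound is uniform in $x\in[0,1]$ and is the quantitative engine of the proof.

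Next I would invoke the two global properties of $f$ that continuity on the compact interval $[0,1]$ guarantees: $f$ is bounded, say $|f(x)|\le M$, and $f$ is uniformly continuous, so that for the given $\epsilon$ there is a $\delta>0$ with $|f(s)-f(t)|<\frac{\epsilon}{2}$ whenever $|s-t|<\delta$. Using $\sum_l p_{N,l}(x)=1$, I rewrite the error as $f(x)-B_N(f,x)=\sum_{l=0}^N\left(f(x)-f(\tfrac{l}{N})\right)p_{N,l}(x)$ and pass to absolute values inside the sum.

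The decisive step is to split this sum according to whether $\left|\frac{l}{N}-x\right|<\delta$ or not. On the ``near'' indices, uniform continuity bounds each term by $\frac{\epsilon}{2}$, and since the weights $p_{N,l}(x)$ sum to at most $1$, the total near-contribution is at most $\frac{\epsilon}{2}$. On the ``far'' indices I bound $|f(x)-f(\tfrac{l}{N})|\le 2M$ and control the total weight by a Chebyshev-type estimate: $\sum_{|l/N-x|\ge\delta}p_{N,l}(x)\le \frac{1}{\delta^2}\sum_l\left(\frac{l}{N}-x\right)^2 p_{N,l}(x)\le \frac{1}{4N\delta^2}$, so the far-contribution is at most $\frac{M}{2N\delta^2}$. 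Choosing $N_0$ so large that $\frac{M}{2N\delta^2}<\frac{\epsilon}{2}$ for all $N>N_0$ makes the two pieces sum to less than $\epsilon$ uniformly in $x$, which is the claim.

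I would expect the main obstacle to be the variance identity: deriving $\sum_l l^2 p_{N,l}(x)$ cleanly (most transparently by applying $x\frac{d}{dx}$ twice to the binomial identity, or via the moment formulas $E[l]=Nx$ and $E[l(l-1)]=N(N-1)x^2$) and then assembling it into the $\frac{x(1-x)}{N}$ form with the uniform $\frac{1}{4N}$ bound. Once that estimate is in hand, the near/far splitting is routine, and the uniformity in $x$ follows because every bound above is independent of $x$.
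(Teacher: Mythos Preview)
Your argument is the classical Bernstein proof and is correct as outlined: the moment identities, the uniform variance bound $\sum_l(l/N-x)^2 p_{N,l}(x)=x(1-x)/N\le 1/(4N)$, and the near/far split driven by uniform continuity and Chebyshev all go through exactly as you describe, and every estimate is uniform in $x$. The paper itself does not supply a proof of this lemma; it is stated as a cited result (Bernstein's theorem), so there is no in-paper argument to compare against. Your write-up would in fact fill in what the paper leaves to the reference.
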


\begin{lemma}\label{lem:04}
Let $p(x) = \sum\limits_{l=0}^Nb_lx^l$ be any polynomial and $0<\alpha_k<\frac{\pi}{2}$ for all $0\le k\le N$. Then there exist $\omega_k\in [0,2\pi]$ and $A_k\in \mathbb{R}$ for $0\le k\le N$ such that
\begin{equation}\label{eq:004}
p(x) = \sum_{k=0}^NA_kT_N(\omega_k,\alpha_k,x)
\end{equation}
for all $x\in [0,1]$, here $T_N(\omega_k,\alpha_k,x)$ is determined by \eqref{eq:001}.
\end{lemma}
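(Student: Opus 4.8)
The plan is to recast the identity \eqref{eq:004} as a linear system in the unknown coefficients $A_k$ and to show that, for a suitable choice of the frequencies $\omega_k$, this system is solvable. Expanding each $T_N(\omega_k,\alpha_k,x)$ via \eqref{eq:001} and collecting powers of $x$, the requirement $p(x)=\sum_{k=0}^N A_k T_N(\omega_k,\alpha_k,x)$ is equivalent to matching the coefficient of $x^l$ on both sides for each $l=0,\dots,N$, namely
\begin{equation*}
\sum_{k=0}^N A_k\,\frac{\omega_k^{\,l}\sin\!\left(\alpha_k+\tfrac{l\pi}{2}\right)}{l!}=b_l,\qquad l=0,1,\dots,N.
\end{equation*}
In matrix form this reads $M\mathbf{A}=\mathbf{b}$, where $M_{l,k}=\omega_k^{\,l}\sin(\alpha_k+\tfrac{l\pi}{2})/l!$. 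First I would factor out the diagonal matrix $D=\operatorname{diag}(1/0!,\dots,1/N!)$, which is invertible, so that $M$ is invertible if and only if the matrix $\widetilde M$ with entries $\widetilde M_{l,k}=\omega_k^{\,l}\sin(\alpha_k+\tfrac{l\pi}{2})$ is invertible. It therefore suffices to exhibit frequencies $\omega_0,\dots,\omega_N\in[0,2\pi]$ for which $\det\widetilde M\neq 0$; the coefficients are then recovered as $\mathbf{A}=M^{-1}\mathbf{b}$.

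The key step is to treat $\det\widetilde M$ as a polynomial in the variables $\omega_0,\dots,\omega_N$ and to show it does not vanish identically. Writing out the Leibniz expansion $\det\widetilde M=\sum_{\sigma}\operatorname{sgn}(\sigma)\prod_{k=0}^N \omega_k^{\sigma(k)}\sin(\alpha_k+\tfrac{\sigma(k)\pi}{2})$, I observe that each permutation $\sigma$ contributes the distinct monomial $\prod_k \omega_k^{\sigma(k)}$, so no cancellation between different permutations is possible and the coefficient of a given monomial is read off from a single $\sigma$. Focusing on the identity permutation, the coefficient of $\prod_{k=0}^N \omega_k^{k}$ equals $\prod_{k=0}^N \sin(\alpha_k+\tfrac{k\pi}{2})$. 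Since $\sin(\alpha_k+\tfrac{k\pi}{2})$ equals one of $\pm\sin\alpha_k$ or $\pm\cos\alpha_k$ according to $k\bmod 4$, the hypothesis $0<\alpha_k<\tfrac{\pi}{2}$ guarantees every factor is nonzero, so this coefficient is nonzero and $\det\widetilde M$ is a nontrivial polynomial.

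Finally I would invoke a genericity argument: a nonzero real polynomial in $N+1$ variables vanishes only on a set of Lebesgue measure zero, whereas the cube $[0,2\pi]^{N+1}$ has positive measure, so there exists $(\omega_0,\dots,\omega_N)\in[0,2\pi]^{N+1}$ with $\det\widetilde M\neq 0$. Fixing such frequencies makes $M$ invertible and yields the required $A_k$, completing the proof. I expect the main obstacle to be establishing the non-vanishing of the determinant; this difficulty is resolved by the observation that distinct permutations produce distinct monomials, which isolates the identity term and reduces the whole question to the elementary fact that $\sin\alpha_k$ and $\cos\alpha_k$ are nonzero on $(0,\tfrac{\pi}{2})$.
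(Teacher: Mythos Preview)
Your proposal is correct and follows the same overall strategy as the paper: rewrite \eqref{eq:004} as a linear system in the $A_k$ by matching coefficients of $x^l$, and then argue that the $\omega_k$ can be chosen in $[0,2\pi]$ so that the coefficient matrix is invertible. The only difference is in how the non-vanishing of the determinant is justified: the paper simply says ``by induction in $N$'' one can select the $\omega_k$, while you treat $\det\widetilde M$ as a multivariate polynomial in the $\omega_k$, use the Leibniz expansion to see that distinct permutations yield distinct monomials, isolate the nonzero identity-permutation coefficient $\prod_k\sin(\alpha_k+\tfrac{k\pi}{2})$, and finish with a measure-zero argument. Your argument is more explicit and self-contained; the paper's inductive step, if fleshed out, would amount to the same observation that the determinant is a nontrivial polynomial in $\omega_N$ (with leading coefficient given by a lower-order determinant times $\sin(\alpha_N+\tfrac{N\pi}{2})\neq 0$).
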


\begin{proof}
Recall the formula determined by \eqref{eq:001} then we have
\begin{align*}
\sum_{k=0}^NA_kT_N(\omega_k,\alpha_k,x) = & \sum_{k=0}^N\sum_{l=0}^NA_k\dfrac{\omega_k^l\sin(\alpha_k+\frac{l\pi}{2})x^l}{l!}\\
 =& \sum_{l=0}^N \left( \sum_{k=0}^N \dfrac{A_k\omega_k^l\sin(\alpha_k+\frac{l\pi}{2})}{l!} \right)x^l.
\end{align*}
To obtain \eqref{eq:004}, we need to choose $\omega_k$ and $A_k$ such that
$$
\sum_{k=0}^N \dfrac{A_k\omega_k^l\sin(\alpha_k+\frac{l\pi}{2})}{l!} = b_l,\qquad 0\le l\le N.
$$
Equivalently, we need to find $\omega_k$ and $A_k$ such that
\begin{equation}\label{eq:005}
\sum_{k=0}^NA_k\omega_k^l\sin(\alpha_k+\frac{l\pi}{2}) = b_l\cdot l!,\qquad 0\le l\le N.
\end{equation}

Consider the following $(N+1)\times (N+1)$ matrix
\[
M = \begin{bmatrix}
\sin(\alpha_0) & \sin(\alpha_1)&\cdots&\sin(\alpha_N)\\
\omega_0\sin(\alpha_0+\frac{\pi}{2})&\omega_1\sin(\alpha_1+\frac{\pi}{2})&\cdots&\omega_N\sin(\alpha_N+\frac{\pi}{2})\\
\vdots&\ddots&\cdots&\vdots\\
\omega_0^N\sin(\alpha_0+\frac{N\pi}{2})&\omega_1^N\sin(\alpha_1+\frac{N\pi}{2})&\cdots&\omega_N^N\sin(\alpha_N+\frac{N\pi}{2})
\end{bmatrix}.
\]
Since $0<\alpha_k<\frac{\pi}{2}$, $\sin(\alpha_k+\frac{l\pi}{2})\ne 0$ for all $0\le l\le N$. By induction in $N$, we can select $\omega_0,\omega_1,\ldots,\omega_N\in [0;2\pi]$ such that $\det(M)\ne 0$. Therefore the system of equations \eqref{eq:005} with the augmented matrix $M$ has a solution for $A_0,A_1,\ldots,A_N$. This completes the proof of Lemma \ref{lem:04}.
\end{proof}

\begin{proof}[Proof of Theorem \ref{thm:05}]
For given $\epsilon >0$, by Lemmas \ref{lem:02} and \ref{lem:03}, there exist $N_0\in\mathbb{N}$ such that for all $N>N_0$ we have
\begin{equation}\label{eq:007}
\sup_{0\le x\le 1}\left| f(x) - p(x) \right| < \epsilon/2,
\end{equation}
where $p(x) = B_N(f,x)$ is the Bernstein polynomial of $f$,
and
\begin{equation}\label{eq:008}
\sup_{0\le x\le 1}\left| \sum_{k=0}^NA_k\sin(\omega_kx+\alpha_k) - \sum_{k=0}^NA_kT_N(\omega_k,\alpha_k,x) \right|
< \epsilon/2
\end{equation}
for some $\omega_k,A_k$  to be chosen later and and $\alpha_k = \frac{k\alpha}{N+1}\in (0,\frac{\pi}{2})$.

By virtue of Lemma \ref{lem:04}, we can find $\omega_k\in[0,2\pi]$ and $A_k\in\mathbb{R}$ for $0\le k\le N$ such that
$$
p(x) = \sum_{k=0}^NA_kT_N(\omega_k,\alpha_k,x).
$$
Now combining \eqref{eq:007} and \eqref{eq:008}, we obtain the conclusion of Theorem \ref{thm:05}.

\end{proof}

\section{Sinusoidal Approximation Theorem for Two-layer Neural Networks}

We next extend the result of Theorem \ref{thm:05} to approximate any continuous function defined on a compact domain in $\mathbb{R}^n$. To simplify the treatment of function extension, we restrict our attention to functions supported within the unit cube $I^n = [0,1]^n$. Under this setting, a two-layer neural network with sinusoidal activation functions can approximate any continuous function on the unit cube. A similar result for sigmoidal activation functions was established by Kůrková in \cite{Kurkova1991KARTandNeuralNetworks}. The main goal of this section is to approximate a continuous function on a compact domain by a finite sum of nested sinusoidal functions of the form:
\begin{equation}\label{eq:10}
\sum_{j=0}^MB_{j}\sin\Big(\nu_{j}
\sum_{p=1}^n
\sum_{k=0}^NA_{pk}^j\sin(\omega_{k}x_p+\varphi_{k})+\gamma_{j}).
\end{equation}

\begin{theorem}\label{thm:06}
Let $f: I^n\to \mathbb{R}$ be a function. The for any $\varepsilon >0$, there exist $A_{pq,k},\omega_{1k},\varphi_{1k},\quad 0\le k\le N$ and $B_{qj},\omega_{2j},\varphi_{2j},\quad 0\le j\le M$ such that
\begin{equation}\label{eq:006}
\Big|
f(x) - \sum_{q=1}^{2n+1}\sum_{j=0}^MB_{qj}\sin\Big(\omega_{2j}
\sum_{p=1}^n
\sum_{k=0}^NA_{pq,k}\sin(\omega_{1k}x_p+\varphi_{1k})+\varphi_{2j})
\Big|
<\varepsilon.
\end{equation}
\end{theorem}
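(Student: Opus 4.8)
The plan is to combine the Kolmogorov--Arnold representation \eqref{eq:kart} with the single-variable result of Theorem \ref{thm:05}, approximating the inner and outer functions separately and then controlling the error propagated through the composition. Assuming $f$ is continuous (as is required for \eqref{eq:kart} to hold), write
$$
f(x) = \sum_{q=0}^{2n}\phi_q(\xi_q(x)), \qquad \xi_q(x) = \sum_{p=1}^n \psi_{p,q}(x_p),
$$
where the inner functions $\psi_{p,q}$ are continuous on $[0,1]$ and the outer functions $\phi_q$ are continuous; the $2n+1$ summands match the index range $q=1,\dots,2n+1$ in \eqref{eq:006}. Since each $\psi_{p,q}$ is bounded on $[0,1]$, every $\xi_q$ takes values in a common compact interval $[a,b]$, which I would enlarge slightly to $[a',b']=[a-\eta,\,b+\eta]$ to leave room for perturbation.

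First I would fix the outer layer. Via the affine change of variables $y=a'+(b'-a')t$ with $t\in[0,1]$, Theorem \ref{thm:05} extends to the interval $[a',b']$ (the resulting frequencies may exceed $2\pi$ and the phases are shifted, but this is harmless since \eqref{eq:006} imposes no constraint on the outer parameters $\omega_{2j},\varphi_{2j}$). I would thus approximate each $\phi_q$ on $[a',b']$ by a finite sinusoidal sum $\Phi_q(y)=\sum_{j=0}^M B_{qj}\sin(\omega_{2j}y+\varphi_{2j})$ with $\sup_{y\in[a',b']}|\phi_q(y)-\Phi_q(y)|<\varepsilon/(2(2n+1))$. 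This fixes $M$, the shared $\omega_{2j},\varphi_{2j}$, and the amplitudes $B_{qj}$, and crucially it fixes each $\Phi_q$ as a trigonometric polynomial, hence globally Lipschitz with some constant $L_q$; set $L=\max_q L_q$.

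Next I would fix the inner layer finely enough, choosing $\delta<\min\{\eta,\ \varepsilon/(2(2n+1)L)\}$. By Theorem \ref{thm:05} each $\psi_{p,q}$ admits a sinusoidal approximant $S_{p,q}(x_p)=\sum_{k=0}^N A_{pq,k}\sin(\omega_{1k}x_p+\varphi_{1k})$ on $[0,1]$ with error below $\delta/n$. The essential point that legitimizes the shared frequencies $\omega_{1k}$ and phases $\varphi_{1k}$ appearing in \eqref{eq:006} is that in Lemma \ref{lem:04} the frequencies are selected only to render the matrix $M$ invertible and depend solely on $N$ and the (linearly spaced) phases, not on the target polynomial. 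Hence, after taking a single $N$ large enough to serve all pairs $(p,q)$ simultaneously and fixing one common family $\{\omega_{1k},\varphi_{1k}\}$, I would merely solve the corresponding linear system for the amplitudes $A_{pq,k}$ of each inner function. Writing $\tilde\xi_q(x)=\sum_p S_{p,q}(x_p)$ then gives $\sup_x|\xi_q(x)-\tilde\xi_q(x)|<\delta$.

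Finally I would assemble the estimate. Adding and subtracting $\Phi_q(\xi_q)$ and using the triangle inequality,
$$
\Big|\sum_q \phi_q(\xi_q)-\sum_q \Phi_q(\tilde\xi_q)\Big|
\le \sum_q \big|\phi_q(\xi_q)-\Phi_q(\xi_q)\big| + \sum_q \big|\Phi_q(\xi_q)-\Phi_q(\tilde\xi_q)\big|,
$$
where the first sum is below $(2n+1)\cdot\varepsilon/(2(2n+1))=\varepsilon/2$ since $\xi_q(x)\in[a',b']$, and the second is at most $(2n+1)L\delta<\varepsilon/2$ by the Lipschitz bound, yielding \eqref{eq:006}. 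I expect the main obstacle to be bookkeeping the order of the quantifiers: the outer approximants and their Lipschitz constants must be frozen \emph{before} the inner tolerance $\delta$ is chosen, and the shared-frequency claim for the inner layer must be justified through the structure of Lemma \ref{lem:04} rather than by a naive term-by-term appeal to Theorem \ref{thm:05}.
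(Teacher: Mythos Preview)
Your argument is correct and follows the same overall strategy as the paper: apply the Kolmogorov--Arnold representation and then invoke Theorem~\ref{thm:05} on the inner and outer functions separately. The difference lies in the order of approximation and the pivot used for error propagation. The paper approximates the inner functions first, uses the uniform continuity of the \emph{original} outer functions $\phi_q$ on $[a,b]$ to bound $|\phi_q(\xi_q)-\phi_q(\tilde\xi_q)|$, and only then approximates $\phi_q$ at the already-perturbed argument $\tilde\xi_q$; you instead freeze the outer approximants $\Phi_q$ first and use \emph{their} (explicit) Lipschitz constants to control $|\Phi_q(\xi_q)-\Phi_q(\tilde\xi_q)|$. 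Your version is more careful on two technical points that the paper glosses over: you enlarge $[a,b]$ to $[a',b']$ so that the perturbed inner sums remain inside the domain on which the outer approximation is guaranteed, and you explicitly justify---via the structure of Lemma~\ref{lem:04}, where the $\omega_k$ are chosen only to make the matrix $M$ nonsingular---why the inner frequencies and phases $\omega_{1k},\varphi_{1k}$ can be taken common to all pairs $(p,q)$ as required by the form of \eqref{eq:006}.
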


\begin{proof}

By Kolmogorov Theorem, any continuous function $f: I^n\to \mathbb{R}$ can be written as
$$
f(x_1,\ldots,x_n) = \sum_{q=1}^{2n+1}\phi_q(\sum_{p=1}^n\psi_{pq}(x_p)),
$$
where $\phi_q$ and $\psi_{pq}$ are continuous functions on the real number line.

Each function $\psi_{pq}$ is continuous, the sum $\sum_{p=1}^n\psi_{pq}(x_p)$ is bounded for all $x\in I^n$. It means that
$$
a\le \sum_{p=1}^n\psi_{pq}(x_p)\le b
$$
for some $a,b\in\mathbb{R}$ and for all $x\in I^n$.

For each function $\psi_{pq}(x_p)$, we need to approximate it by Theorem \ref{thm:05}
$$
\mid \psi_{pq}(x_p) - \sum_{k=0}^NA_{pq,k}\sin(w_{1k}x_p+\varphi_{1k}) \mid < \delta_1/n.
$$
Therefore
$$
\Big|
\sum_{p=1}^n\psi_{pq}(x_p)
-\sum_{p=1}^n \Big(\sum_{k=0}^NA_{pq,k}\sin(w_{1k}x_p+\varphi_{1k})\Big) 
\Big|
< \delta_1.
$$

Now each function $\phi_q$ is uniformly continuous on $[a, b]$, so
$
|\phi_q(u) - \phi_q(v)| <\varepsilon/(4n+2)
$
whenever $|u-v|<\delta_1$.
Now we have

\begin{equation}\label{eq:psi1}
\mid\phi_q(\sum_{p=1}^n\psi_{pq}(x_p)) - \phi_q\Big(\sum_{p=1}^n \Big(\sum_{k=0}^NA_{pq,k}\sin(w_{1k}x_p+\varphi_{1k})\Big) \Big)\mid < \varepsilon/(4n+2).
\end{equation}

Next, we apply Theorem \ref{thm:05} to $\phi_q$ again to obtain
\begin{equation}\label{eq:psi2}
\mid
\phi_q(\sum_{p=1}^n\sum_{k=0}^NA_{pq,k}\sin(w_{1k}x_p+\varphi_{1k}) ) - 
\sum_{j=0}^MB_{qj}\sin\Big(w_{2j}\Big(\sum_{p=1}^n \Big(\sum_{k=0}^NA_{pq,k}\sin(w_{1k}x_p+\varphi_{1k})  \Big) \Big)+\varphi_{2j})  
\mid < \delta_2
\end{equation}
where $\delta_2 = \varepsilon/(4n+2)$.

We now combine equations \eqref{eq:psi1} and \eqref{eq:psi2} to complete the proof of Theorem \ref{thm:06}.

\end{proof}

\section{Numerical Analysis}

To evaluate the performance of our sinusoidal approximation, we compare it with the Fourier series for approximating one-dimensional functions. For higher-dimensional problems with two inputs and one output, we benchmark our approach against two-layer (MLPs) \cite{mlp}, using either ReLU \cite{relu} or sine activation functions, as well as against multi-dimensional truncated Fourier series. In both cases, we consider functions with a single output and compute the relative $L^2$ error as follows:

\begin{equation}
    \text{Relative $L^2$ Error} \;=\; \frac{\left\lVert \mathbf{y}_{\text{values}} - \mathbf{y}_{\text{fit}} \right\rVert_2}{\left\lVert \mathbf{y}_{\text{values}} \right\rVert_2}.
\end{equation}

Based on Theorem \ref{thm:05}, we construct our SineKAN model and analyze its performance numerically using the following neural network formulation for one-dimensional functions:

\begin{equation}
    y = \sum_k^G A_k\sin(\omega_k x + k/(G+1))+b,
\end{equation}
where $\omega_k$, $A_k$, and $b$ are learnable frequency parameters, amplitude functions, and a bias term, respectively. For multi-dimensional functions, Theorem \ref{thm:06} guides the construction of SineKAN layers as follows:

\begin{gather}
    y_j = \sum_k^G\sum_l^N A_{jkl}\sin(\omega_k x_l + \frac{k}{G+1} + \frac{l\pi}{N+1} )+b_j,\\
    z_m = \sum_n^G\sum_j^N B_{mnj}\sin(\nu_n y_j + \frac{j}{G+1} + \frac{n\pi}{N+1})+c_m,
\end{gather}
where $A_{jkl}$ and $B_{mnj}$ are learnable amplitude tensors, $b_j$ and $c_m$ are learnable bias vectors, and $\omega_k$ and $\nu_n$ are learnable frequency vectors. For the one-dimensional case, we consider functions defined on a uniform grid of input values from $0.01$ to $1$. These functions pose challenges for convergence in Fourier series due to their singularities or non-periodicity:

The first function,
\begin{equation}\label{eq:1Dfunc1}
    f(x) = e^{-\frac{1}{x}} \sin\left(\frac{1}{x}\right),
\end{equation}
is non-periodic, has small magnitude across the domain, and exhibits a strong singularity at $x = 0$. 

The second function,
\begin{equation}\label{eq:1Dfunc2}
    f(x) = \sum_{k} e^{\frac{kx}{\pi}} \sin(kx),
\end{equation}
shows rapid growth and high-frequency oscillations near $x = 0$.

The third function,
\begin{equation}\label{eq:1Dfunc3}
    f(x) = \sum_{k} e^{-\frac{1}{x}} \sin\left(kx + \frac{\pi}{k}\right),
\end{equation}
incorporates phase shifts to evaluate the model's performance and convergence with respect to linearly spaced phases.

The final two functions are particularly challenging for Fourier series convergence, allowing us to test our model's convergence behavior:
\begin{equation}\label{eq:1Dfunc4}
    f(x) = x^{\frac{1}{5}} \sin\left(\frac{1}{x}\right),
\end{equation}
\begin{equation}\label{eq:1Dfunc5}
    f(x) = x^{\frac{4}{5}} \sin\left(\frac{1}{x}\right).
\end{equation}

All models are fitted using the Trust Region Reflective algorithm for least-squares regression from the \texttt{scipy} package \cite{trf}. Each function is fitted for a default of 100 steps per fitted parameter.

\begin{figure}[ht!]
    \centering
    \includegraphics[width=1.\linewidth]{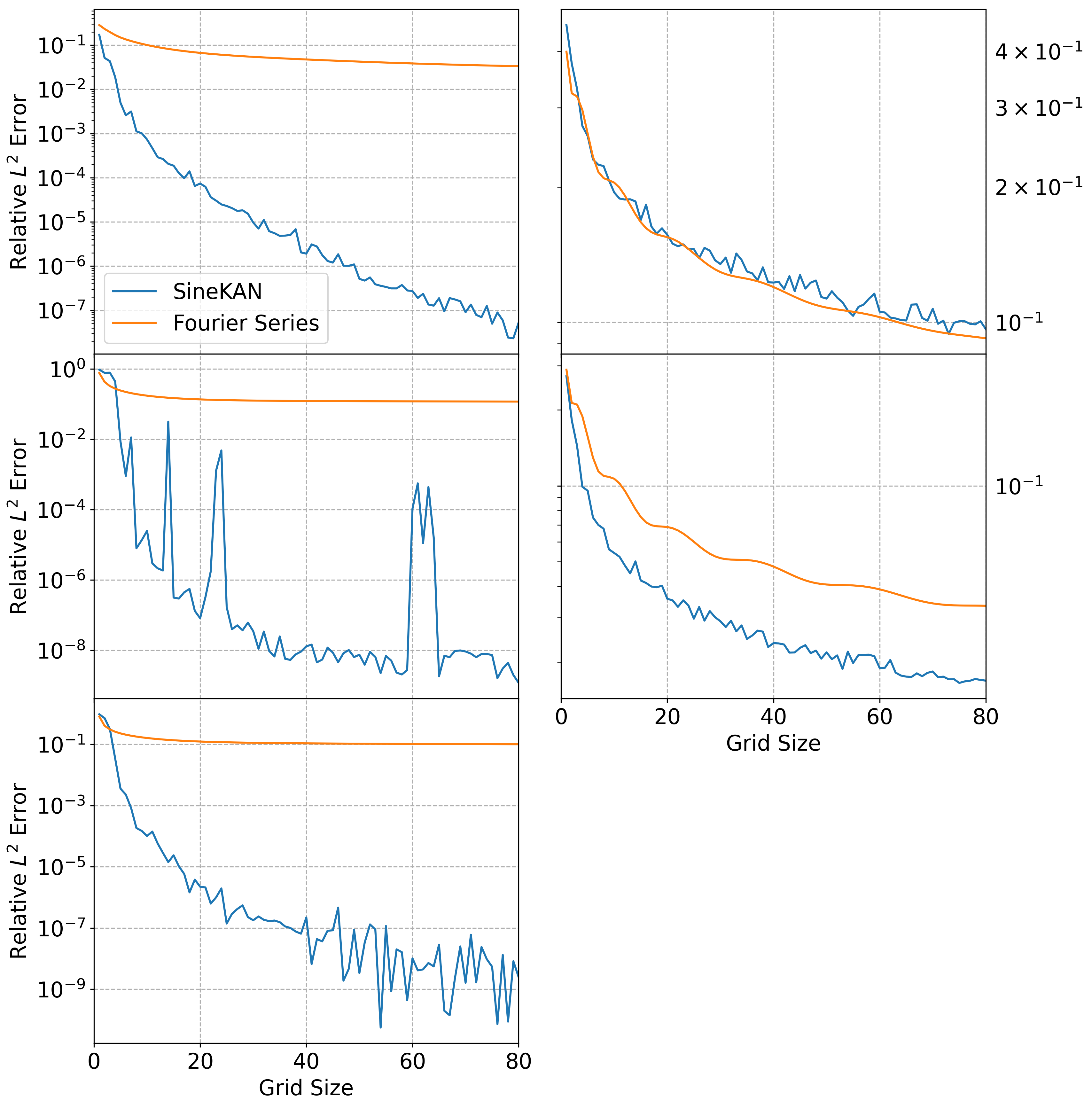}
    \caption{Approximation error as a function of grid size: top left \cref{eq:1Dfunc1}, middle left \cref{eq:1Dfunc2}, bottom left \cref{eq:1Dfunc3}, top right \cref{eq:1Dfunc4}, middle right \cref{eq:1Dfunc5}.}
    \label{fig:1Dapprox}
\end{figure}

For \eqref{eq:1Dfunc1}, \eqref{eq:1Dfunc2}, and \eqref{eq:1Dfunc3}, the SineKAN approximation significantly outperforms the Fourier series approximation. In \eqref{eq:1Dfunc4}, performance is roughly comparable between the two. We observe that the function in \eqref{eq:1Dfunc4} has less regularity, which causes both the SineKAN and the Fourier series to converge slowly.

For multidimensional functions, we benchmark the following two equations on a 100 by 100 mesh grid of input values ranging from 0.01 to 1:

The first function,
\begin{equation}\label{eq:2Dfunc1}
    f(x, y) = x^2 + y^2 - a e^{-\frac{(x-1)^2 + y^2}{c}} - b e^{-\frac{(x+1)^2 + y^2}{d}},
\end{equation}
with parameters $a = \frac{3}{2}$, $b = 1$, $c = 0.5$, and $d = 0.5$, features Gaussian-like terms that create a complex surface, suitable for testing convergence on smooth but non-trivial landscapes.

The second function (Rosenbrock function),
\begin{equation}\label{eq:2Dfunc2}
    f(x, y) = (a - x)^2 + b (y - x^2)^2,
\end{equation}
with parameters $a = 1$ and $b = 2$, represents a non-linear, non-symmetric surface, ideal for evaluating convergence in challenging multidimensional optimization problems.

\begin{figure}[ht!]
    \centering
    \includegraphics[width=1.\linewidth]{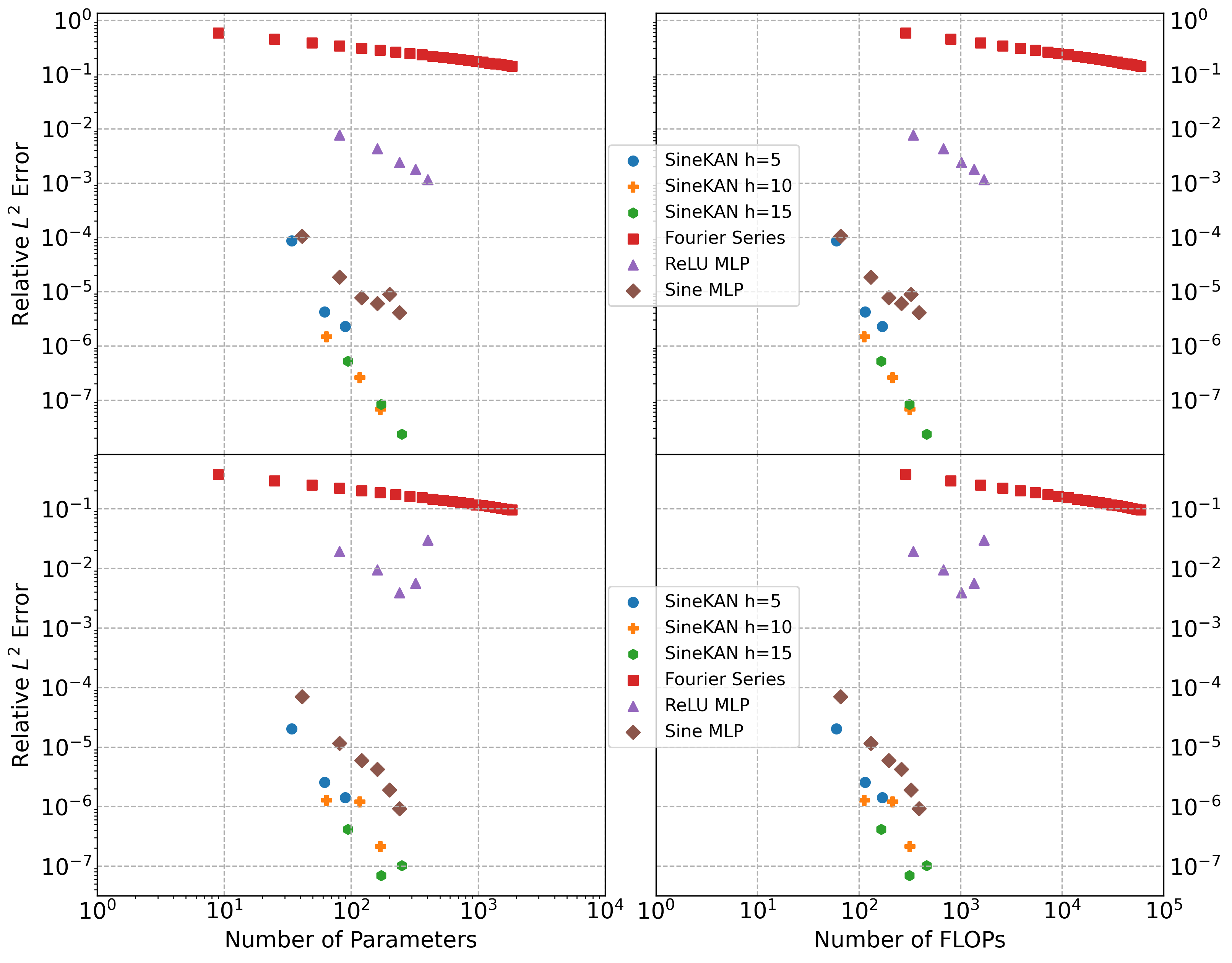}
    \caption{Loss as a function of number of parameters (Left) and FLOPs (Right) for \cref{eq:2Dfunc1} (Top) and \cref{eq:2Dfunc2} (Bottom).}
    \label{fig:2Dparams}
\end{figure}


We show in Figure \ref{fig:2Dparams} that the two-layer SineKAN outperforms the two-layer MLP with sinusoidal activation functions as a function of the number of parameters. MLP with ReLU activations perform substantially worse with several orders of magnitude higher error as a function of the number of parameters and Fourier series with a characteristic error roughly one to two orders of magnitude greater than the two-layer MLP with ReLU.

We also compute performance as a function of the number of relative FLOPs or compute units. To do this calculation, we run 10 million iterations using numpy arrays of size 1024 to estimate the relative compute time of addition, multiplication, ReLU, and sine and find that, when setting addition and multiplication to approximately 1 FLOP, ReLU costs an estimated 1.5 FLOPs, and sine functions cost 12 FLOPs. We carry out similar estimates in pytorch and find that relative FLOPs for sine would be closer to 3.5 in pytorch and relative FLOPs for ReLU would be around 1 FLOP. The \cref{fig:2Dparams} is based on numpy estimates.

\FloatBarrier

\section{Discussion}
The original implementation of the KAN model developed by Liu et al. used basis-spline functions \cite{liu2024kan}. These were proposed as an alternative to MLP due to their improved explainability, domain segmentation, and strong numerical performance in modeling functions. However, later work showed that, when accounting for increases in time and space complexity, the basis-spline KAN underperformed MLP \cite{fairkancompare}. Previous work on the Sirens model has shown that, for function modeling, particularly for continuously differentiable functions, sinusoidal activations can improve the performance of MLP architectures \cite{siren}. This motivated the development of the SineKAN architecture, which builds on both concepts by combining KAN's learnable on-edge activation functions and on-node weighted summation with the periodic activations from Sirens \cite{sinekan}.

We further extend this work by providing a robust constructive proof for the approximation power of the SineKAN model. We show that a single layer is sufficient for approximation of arbitrary 1D functions and that a two-layer SineKAN is sufficient for approximation of arbitrary multivariable functions bounded by the same constraints of the original KART \cite{kolmogorov1957representations}.

We show in \cref{fig:1Dapprox} and \cref{fig:2Dparams} that these functions can achieve low errors in modeling mathematical functions with features such as rapid- and variable-frequency oscillations. For 2D functions, we show that SineKAN outperforms MLP, including MLP with sinusoidal activations, with flexible model parameter combinations when accounting for both time and space complexity of the models. This strongly motivates further exploration of this model for numerical approximation tasks.

Given the inherent periodic nature of sinusoidal functions, our approximation framework \eqref{eq:006} shows strong potential for modeling periodic and time-series data. Future work will explore the extension of SineKAN to continual learning tasks, particularly in scenarios involving dynamic environments or non-stationary data. Further directions include theoretical analysis of generalization bounds, integration with neural differential equations, and applications in signal processing and real-time prediction systems.

\section{Conclusion}
In this paper, we build upon Lorentz's and Sprecher's foundational work \cite{lorentz1966approximation,sprecher1965structure} to establish two main theorems for approximating single and multi-variable functions using the sinusoidal activation function. Our proposed SineKAN models introduce learnable frequencies, amplitudes, and biases, offering a flexible and expressive framework for function approximation. Through numerical experiments, we demonstrate that SineKAN outperforms the classical Fourier series and MLP in accuracy across a variety of test cases. To support reproducibility and ease of experimentation, we provide a link to our code below.

\section{Code Availability}
The source code for our implementation is available in a GitHub repository: 
\url{https://github.com/ereinha/SinusoidalApproximationTheorem}.

\section{Acknowledgement}
This work was supported by the U.S. Department of Energy (DOE) under Award No. DE-SC0012447 (D.R., E.R., and S.G.).

\bibliography{references}

\end{document}